\newcommand{\myi}{(\emph{i})\xspace}
\newcommand{\myii}{(\emph{ii})\xspace}
\newcommand{\myiii}{(\emph{iii})\xspace}
\newcommand{\myiv}{(\emph{iv})\xspace}
\newcommand{\mya}{(\emph{a})\xspace}
\newcommand{\myb}{(\emph{b})\xspace}
\newcommand{\myc}{(\emph{c})\xspace}
\newcommand{\A}{\mathcal{A}}
\newcommand{\D}{\mathcal{D}}
\newcommand{\E}{\mathcal{E}} 
\newcommand{\G}{\mathcal{G}}
\renewcommand{\L}{\mathcal{L}}
\renewcommand{\P}{\mathcal{P}}
\newcommand{\U}{\mathcal{U}}
\newcommand{\X}{\mathcal{X}}
\newcommand{\Y}{\mathcal{Y}} 
\newcommand{\Z}{\mathcal{Z}}
\newcommand{\trace}{\pi}
\newcommand{\tiff}{\text{ iff }}
\newcommand{\LTL}{{\sc ltl}\xspace}
\newcommand{\LTLf}{{\sc ltl}$_f$\xspace}
\newcommand{\DFA}{{\sc dfa}\xspace}
\newcommand{\DFAs}{{\sc dfa}s\xspace}
\newcommand{\true}{\mathit{true}}
\newcommand{\false}{\mathit{false}}
\newcommand{\Next}{\raisebox{-0.27ex}{\LARGE$\circ$}}
\newcommand{\Wnext}{\raisebox{-0.27ex}{\LARGE$\bullet$}}
\newcommand{\Until}{\mathop{\U}}
\newcommand{\lneg}{\neg}
\newcommand{\limpl}{\rightarrow}
\newcommand{\WinSet}{\mathsf{W}}
\newcommand\run[1]{{\sf{Run}{#1}}}
\newcommand{\ls}{{\sf{lst}}}
\newcommand{\env}{ {\sc env} \xspace}
\newcommand{\ag}{ {\sc ag} \xspace}
\newcommand{\stagd}{{\sigma}_{\ag}}
\newcommand{\wine}{\mathsf{W}_\env}
\newcommand{\wina}{\mathsf{W}_\ag}
\newcommand{\besyft}{\textit{BeSyft}\xspace}
\newcommand{\comp}{\mathsf{Comp}\xspace}
\newcommand{\inter}{\mathsf{Inter}\xspace}
\begin{document}
%
\title{Symbolic \LTLf Best-Effort Synthesis}

%
%
\author{Giuseppe De Giacomo\inst{1,2} \and
Gianmarco Parretti\inst{1}\thanks{Corresponding author} \and
Shufang Zhu\inst{2}\thanks{Corresponding author}}
\authorrunning{De Giacomo et al.}
%
\institute{University of Rome ``La Sapienza”, Italy \and
University of Oxford, UK\\
\email{\{degiacomo, parretti\}@diag.uniroma1.com} \\
\email{giuseppe.degiacomo@cs.ox.ac.uk} \\
\email{shufang.zhu@cs.ox.ac.uk}}
\maketitle

\begin{abstract}
We consider an agent acting to fulfil tasks in a nondeterministic environment. When a strategy that fulfills the task regardless of how the environment acts does not exist, the agent should at least avoid adopting strategies that prevent from fulfilling its task. Best-effort synthesis captures this intuition.
In this paper, we devise and compare various symbolic approaches for best-effort synthesis in Linear Temporal Logic on finite traces (\LTLf).  These approaches are based on the same basic components, however they change in how these components are combined, and this has a significant impact on the performance of the approaches as confirmed by our empirical evaluations. 

\end{abstract}

\section{Introduction}\label{sec:intro}
We consider an agent acting to fulfill tasks in a nondeterministic environment, as considered in Planning in nondeterministic~(adversarial) domains~\cite{Cimatti03,ghallab2004automated}, except that we specify both the environment and the task in Linear Temporal Logic (\LTL)~\cite{ADMR19}, the formalism typically used for specifying complex dynamic properties in Formal Methods~\cite{BaKG08}.  

In fact, we consider Linear Temporal Logic on finite traces~(\LTLf)~\cite{DegVa13,DegVa15}, which maintains the syntax of \LTL \cite{Pnu77} but is interpreted on finite traces. In this setting, we study synthesis~\cite{PR89,finkbeiner2016synthesis,DegVa15,ADMR19}. In particular, we look at how to synthesize a strategy that is guaranteed to satisfy the task against all environment behaviors that conform to the environment specification.

When a winning strategy that fulfills the agent's task, regardless of how the environment acts, does not exist, the agent should at least avoid adopting strategies that prevent it from fulfilling its task. Best-effort synthesis captures this intuition. More precisely, best-effort synthesis captures the game-theoretic rationality principle that a player would not use a strategy that is ``dominated'' by another of its strategies~(i.e. if the other strategy would fulfill the task against more environment behaviors than the one chosen by the player). 
Best-effort strategies have been studied in~\cite{ADR2021} and proven to have some notable properties:  \myi they always exist, \myii if a winning strategy exists, then best-effort strategies are exactly the winning strategies, \myiii best-effort strategies can be computed in 2EXPTIME as computing winning strategies (best-effort synthesis is indeed 2EXPTIME-complete).

The algorithms for best-effort synthesis in \LTL and \LTLf have been presented in~\cite{ADR2021}. These algorithms are based on creating, solving, and combining the solutions of three distinct games but of the same game arena. 
The arena is obtained from the automata corresponding to the formulas $\E$ and $\varphi$ constituting the environment and the task specifications, respectively.

In particular, the algorithm for \LTLf best-effort synthesis appears to be quite promising in practice since well-performing techniques for each component of the algorithm are available in the literature. These components are: \myi transformation of the \LTLf formulas $\E$ and $\varphi$ into deterministic finite automata (\DFA), which can be double-exponential in the worst case, but for which various good techniques have been developed~\cite{mona,ZTLPV17,BLTV,lydia}; \myii Cartesian product of \DFAs, which is polynomial; \myiii minimization of \DFAs, which is also polynomial;  \myiv fixpoint computation over \DFA to compute adversarial and cooperative winning strategies for reaching the final states, which is again polynomial. 

In this paper, we refine the \LTLf best-effort synthesis techniques presented in~\cite{ADR2021} by using symbolic techniques~\cite{Bryant92,BaKG08,ZTLPV17}. In particular, we show three different symbolic approaches that combine the above operations in different ways (and in fact allow for different levels of minimization). We then compare the three approaches through empirical evaluations. 
From this comparison, a clear winner emerges. Interestingly, the winner does not fully exploit \DFA minimization to minimize the \DFA whenever it is possible. Instead, this approach uses uniformly the same arena for all three games (hence giving up on minimization at some level). Finally, it turns out that the winner performs better in computing best-effort solutions even than state-of-the-art tools that compute only adversarial solutions. These findings confirm that \LTLf best-effort synthesis is indeed well suited for efficient and scalable implementations.

The rest of the paper is organized as follows. In Section~\ref{sec:2}, we recall the main notions of \LTLf synthesis.
In Section~\ref{sec:3}, we discuss \LTLf best-effort synthesis, and the algorithm presented in~\cite{ADR2021}. In Section~\ref{sec:4}, we introduce three distinct symbolic approaches for \LTLf best-effort synthesis: the first (c.f., Subsection~\ref{sec:4-2}) is a direct symbolic implementation of the algorithm presented in \cite{ADR2021}; the second one (c.f., Subsection~\ref{sec:4-3}) favors maximally conducting \DFA minimization, thus getting the smallest possible arenas for the three games; and the third one (c.f., Subsection~\ref{sec:4-4}) gives up \DFA minimization at some level, and creates a single arena for the three games. In Section~\ref{sec:5}, we perform an empirical evaluation of the three algorithms. We conclude the paper in Section~\ref{sec:6}.

\section{Preliminaries}\label{sec:2}
\paragraph{\LTLf Basics.} 
\textit{Linear Temporal Logic on finite traces}~(\LTLf) is a specification language to express temporal properties on finite traces~\cite{DegVa13}. In particular, \LTLf
has the same syntax as \LTL, which is instead interpreted over infinite traces~\cite{Pnu77}. Given a set of propositions $\Sigma$, \LTLf formulas are generated as follows: 
\[
\varphi ::= a \mid (\varphi_1 \wedge \varphi_2) \mid (\neg \varphi) \mid  
 (\Next \varphi) \mid (\varphi_1 \Until \varphi_2)
 \]
where $a \in \Sigma$ is an \textit{atom}, $\Next$~(\emph{Next}), and $\Until$~(\emph{Until}) are temporal operators. 
We make use of standard Boolean abbreviations such as $\vee$~(or) and $\limpl$~(implies), $\true$ and $\false$. In addition, we define the following abbreviations \emph{Weak Next} $\Wnext \varphi \equiv \neg \Next \neg \varphi$, \emph{Eventually} $\Diamond \varphi \equiv \true \Until \varphi$ and \emph{Always} $\Box \varphi \equiv \lneg \Diamond \lneg \varphi$.
The length/size of $\varphi$, written $|\varphi|$, is the number of operators in $\varphi$.

A \emph{finite} (resp. \emph{infinite}) \emph{trace}  is a sequence of propositional interpretations $\trace \in (2^{\Sigma})^*$ (resp. $\pi \in (2^{\Sigma})^\omega$). For every $i \geq 0$, $\trace_i \in 2^{\Sigma}$ is the $i$-th interpretation of $\trace$. Given a finite trace $\trace$, we denote its last instant~(i.e., index) by $\ls(\trace)$. 
\LTLf formulas are interpreted over finite, nonempty traces.  Given a finite, non-empty trace $\trace \in (2^\Sigma)^+$, we define when an \LTLf formula $\varphi$ \emph{holds} at instant $i$, $0 \leq i \leq \ls(\pi)$, written $\trace, i \models \varphi$, inductively on the structure of $\varphi$, as:
\begin{compactitem}
	\item 
	$\trace, i \models a \tiff a \in \trace_i\nonumber$ (for $a\in{\Sigma}$);
	\item 
	$\trace, i \models \lnot \varphi \tiff \trace, i \not\models \varphi\nonumber$;
	\item 
	$\trace, i \models \varphi_1 \wedge \varphi_2 \tiff \trace, i \models \varphi_1 \text{ and } \trace, i \models \varphi_2\nonumber$;
	\item 
	$\pi, i \models \Next\varphi \tiff  i< \ls(\pi)$ and $\trace,i+1 \models \varphi$;
	\item 
	$\pi, i \models \varphi_1 \Until \varphi_2$ iff $\exists j$ such that $i \leq j \leq \ls(\pi)$ and $\pi,j \models\varphi_2$, and $\forall k, i\le k < j$ we have that $\pi, k \models \varphi_1$.
\end{compactitem}
We say $\pi$ \emph{satisfies} $\varphi$, written as $\pi \models \varphi$, if $\pi, 0 \models \varphi$. 


\paragraph{Reactive Synthesis Under Environment Specifications.} 
Reactive synthesis concerns computing a strategy that allows the agent to achieve its goal in an adversarial environment. In many AI applications, the agent has a model describing possible environment behaviors, which we call here an \emph{environment specification}~\cite{ADMR18,ADMR19}. In this work, we specify both environment specifications and agent goals as \LTLf formulas defined over $\Sigma = \X \cup \Y$, where $\X$ and $\Y$ are disjoint sets of variables under the control of the environment and the agent, respectively. 



An \emph{agent strategy} is a function \(\sigma_{ag}: (2^{\X})^* \rightarrow 2^{\Y}\) that maps a sequence of environment choices to an agent choice. Similarly, an \textit{environment strategy} is a function \(\sigma_{env}: (2^{\Y})^+ \rightarrow 2^\X\) mapping non-empty sequences of agent choices to an environment choice. 
A trace $\pi = (X_0 \cup Y_0)(X_1 \cup Y_1)\ldots  \in (2^{\X \cup \Y})^\omega$ is $\sigma_{ag}$-consistent if $Y_0 = \sigma_{ag}(\epsilon)$, where $\epsilon$ denotes empty sequence, and
$Y_{i} = \sigma_{ag}(X_0, \ldots, X_{i-1})$ for every $i > 0$. Analogously, $\pi$ is $\sigma_{env}$-consistent if $X_i = \sigma_{env}(Y_0, \ldots, Y_i)$ for every $i \geq 0$. We define $\pi(\sigma_{ag}, \sigma_{env})$ to be the unique infinite trace that is consistent with both $\sigma_{ag}$ and $\sigma_{env}$. 

Let $\psi$ be an \LTLf formula over $\X \cup \Y$. We say that agent strategy \(\sigma_{ag}\) \emph{enforces} \(\psi\), written \(\sigma_{ag} \triangleright \psi\), if for every environment strategy $\sigma_{env}$, there exists a \emph{finite} prefix of $\pi(\sigma_{ag}, \sigma_{env})$ that satisfies $\psi$.
Conversely, we say that an environment strategy $\sigma_{env}$ \emph{enforces} $\psi$, written $\sigma_{env} \triangleright \psi$, if for every agent strategy $\sigma_{ag}$, every finite prefix of $\pi(\sigma_{ag}, \sigma_{env})$ satisfies $\psi$.
$\psi$ is \emph{agent enforceable} (resp. \emph{environment enforceable}) if there exists an agent  (resp. environment) strategy that enforces it. An \emph{environment specification} is an \LTLf formula $\E$ that is environment enforceable.

%

The problem of \LTLf reactive synthesis under environment specifications is defined as follows.
\begin{definition}~\label{def:reactive}
The \LTLf reactive synthesis under environment specifications problem is defined as a pair \(\P =(\E, \varphi)\), where \LTLf formulas \(\E\) and \(\varphi\) correspond to an environment specification and an agent goal, respectively. Realizability of \(\P\) checks whether there exists an agent strategy \(\sigma_{ag}\) that enforces \(\varphi\) under \(\E\), i.e.,

\[\forall \sigma_{env} \triangleright \E, \pi(\sigma_{ag}, \sigma_{env}) \models \varphi\]
Synthesis of $\P$ computes such a strategy if it exists.

\end{definition}
A naive approach to this problem is a
reduction to standard reactive synthesis of \LTLf formula \(\E \limpl \varphi\)~\cite{ADMR19}. Moreover, it has been shown that the problem of \LTLf reactive synthesis under environment specifications is 2EXPTIME-complete~\cite{ADMR19}. 

\section{Best-effort Synthesis Under Environment Specifications}\label{sec:3}

In reactive synthesis, the agent aims at computing a strategy that enforces the goal regardless of environment behaviors. If such a strategy does not exist, the agent just gives up when the synthesis procedure declares the problem \emph{unrealizable}, although the environment can be possibly ``over-approximated". In this work, we synthesize a strategy ensuring that the agent will do nothing that would needlessly prevent it from achieving its goal -- which we call a \emph{best-effort strategy}.
%
\emph{Best-effort synthesis} is the problem of finding such a strategy~\cite{ADR2021}. 
We start by reviewing what it means for an agent strategy to make more effort with respect to another. 

\begin{definition}~\label{def:dominance}
Let \(\E\) and \(\varphi\) be \LTLf formulas denoting an environment specification and an agent goal, respectively, and \(\sigma_1\) and \(\sigma_2\) be two agent strategies. \(\sigma_1\) \emph{dominates} \(\sigma_2\) for \(\varphi\) under \(\E\), written $\sigma_1 \geq_{\varphi|\E} \sigma_2$, if for every \(\sigma_{env} \triangleright \E, \pi(\sigma_{2}, \sigma_{env}) \models \varphi\) implies \(\pi(\sigma_{1}, \sigma_{env}) \models \varphi\).    
\end{definition}

 Furthermore, we say that \(\sigma_1\) \emph{strictly dominates} \(\sigma_2\), written \(\sigma_ 1 >_{\varphi|\E} \sigma_2\), if \(\sigma_{1} \geq_{\varphi|\E} \sigma_{2}\) and \(\sigma_{2} \not \geq_{\varphi|\E} \sigma_{1}\). 
Intuitively, \(\sigma_1 >_{\varphi | \E} \sigma_2\) means that \(\sigma_1\) does at least as well as \(\sigma_2\) against every environment strategy enforcing \(\E\) and strictly better against one such strategy. If \(\sigma_1\) strictly dominates \(\sigma_2\), then \(\sigma_1\) makes more effort than \(\sigma_{2}\) to satisfy the goal. In other words, if \(\sigma_2\) is strictly dominated by \(\sigma_1\), then an agent that uses \(\sigma_2\) does not do its best to achieve the goal: if it used \(\sigma_1\) instead, it could achieve the goal against a strictly larger set of environment behaviors. Within this framework, a best-effort strategy is one that is not strictly dominated by any other strategy.


\begin{definition}~\label{def:best-effort-strategy}
    An agent strategy \(\sigma\) is best-effort for \(\varphi\) under \(\E\), if there is no agent strategy \(\sigma'\) such that \(\sigma' >_{\varphi|\E} \sigma\).
\end{definition}

It follows immediately from Definition~\ref{def:best-effort-strategy} that if a goal $\varphi$ is agent enforceable under $\E$, then best-effort strategies enforce $\varphi$ under $\E$. Best-effort synthesis concerns computing a best-effort strategy.

\begin{definition}[\cite{ADR2021}]~\label{def:best-effort-synthesis}
The \LTLf best-effort synthesis problem is defined as a pair $\P = (\E, \varphi)$, where \LTLf formulas \(\E\) and \(\varphi\) are the environment specification and the agent goal, respectively. Best-effort synthesis of $\P$ computes an agent strategy that is best-effort for $\varphi$ under $\E$.
\end{definition}

While classical synthesis settings first require checking the realizability of the problem, i.e., the existence of a strategy that enforces the agent goal under environment specification~\cite{DegVa15,PR89}, deciding whether a best-effort strategy exists is trivial, as they always exist.

\begin{theorem}[\cite{ADR2021}] Let $\P = (\E, \varphi)$
be an \LTLf best-effort synthesis problem. There exists a best-effort strategy for $\varphi$ under $\E$.
\end{theorem}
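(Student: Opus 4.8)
The plan is to recast domination in terms of \emph{win sets} and then exhibit a single strategy whose win set is inclusion-maximal. For an agent strategy $\sigma$, let $W(\sigma)$ denote the set of environment strategies $\sigma_{env} \triangleright \E$ with $\pi(\sigma,\sigma_{env}) \models \varphi$. Unwinding Definition~\ref{def:dominance}, $\sigma_1 \geq_{\varphi|\E} \sigma_2$ holds exactly when $W(\sigma_2) \subseteq W(\sigma_1)$, so that a strategy is best-effort precisely when its win set is maximal under inclusion. It is worth noting at the outset that there need \emph{not} be a maximum win set: since the agent commits to its first move before seeing the environment, two strategies can beat incomparable families of environments. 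Hence the argument cannot proceed by producing a single dominating strategy, but must instead produce a \emph{maximal} one.

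First I would reduce to a finite game. Translating $\E$ and $\varphi$ into \DFAs and taking their product yields a finite arena in which the environment is restricted to moves keeping the $\E$-component accepting (possible at every reachable position, since $\E$ is environment enforceable) and the agent seeks to drive the $\varphi$-component into an accepting state. On this arena I would compute, by standard fixpoint/attractor computations, the \emph{adversarial} region $\mathit{Adv}$ of positions from which the agent can force reaching a $\varphi$-accepting state against every $\E$-maintaining environment, with an associated winning strategy; and the \emph{cooperative} region $\mathit{Coop}$ of positions from which the agent and some $\E$-maintaining environment can jointly reach such a state, with an associated cooperatively optimal strategy that always advances along a cooperative path toward the goal. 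The candidate best-effort strategy $\sigma^*$ then plays the adversarial strategy on $\mathit{Adv}$, the cooperative strategy on $\mathit{Coop} \setminus \mathit{Adv}$, and arbitrarily elsewhere.

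The heart of the argument is showing that $\sigma^*$ is not strictly dominated. Suppose toward a contradiction that $\sigma' >_{\varphi|\E} \sigma^*$, so $W(\sigma^*) \subseteq W(\sigma')$ and some $\sigma_{env} \triangleright \E$ is beaten by $\sigma'$ but not by $\sigma^*$. The plays $\pi(\sigma^*,\sigma_{env})$ and $\pi(\sigma',\sigma_{env})$ coincide up to the first position $p$ at which $\sigma^*$ and $\sigma'$ choose different agent moves, and the goal has not been reached along their common prefix. Since $\sigma^*$ never reaches the goal against $\sigma_{env}$, we must have $p \notin \mathit{Adv}$ (otherwise $\sigma^*$ would force the goal from $p$); and since the suffix of $\pi(\sigma',\sigma_{env})$ from $p$ reaches the goal, $p \in \mathit{Coop}$. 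Thus $p \in \mathit{Coop} \setminus \mathit{Adv}$, where $\sigma^*$ plays a cooperative move $a^*$ admitting an $\E$-maintaining environment continuation that reaches the goal.

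The key step is to build a single environment $\hat\sigma_{env} \triangleright \E$ that is beaten by $\sigma^*$ but not by $\sigma'$, contradicting $W(\sigma^*)\subseteq W(\sigma')$. I would let $\hat\sigma_{env}$ agree with $\sigma_{env}$ up to $p$; in the subtree following the move $a^*$ it cooperates with $\sigma^*$ to reach the goal, while in every other subtree --- in particular the one entered by $\sigma'$'s differing move --- it follows an $\E$-maintaining strategy that avoids the goal forever. Such an avoiding strategy exists precisely because $p \notin \mathit{Adv}$: by determinacy of the reach-versus-avoid game on the $\E$-restricted arena, the complement of the agent's forcing region is the environment's forcing region for perpetually avoiding the goal. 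Because the two subtrees are distinguished by the agent's move at $p$, $\hat\sigma_{env}$ can respond differently in each and is thus well defined, yielding $\hat\sigma_{env} \in W(\sigma^*) \setminus W(\sigma')$, the desired contradiction. I expect the main obstacle to lie exactly in this determinacy-plus-$\E$-maintenance step: one must verify that the avoiding strategy can be chosen to keep the $\E$-component accepting throughout (invoking environment enforceability of $\E$) and that the reach-versus-avoid game on the $\E$-restricted arena is genuinely determined, so that $p \notin \mathit{Adv}$ really does hand the environment a perpetual goal-avoiding strategy.
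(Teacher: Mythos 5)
Your proof is correct and takes essentially the same route as the paper: the paper defers the proof to \cite{ADR2021}, but its Algorithm~0 is precisely your construction --- product arena, restriction to the environment's winning region for maintaining $\E$, adversarial winning strategy on the agent's winning region and cooperative winning strategy elsewhere --- and your maximality argument (locating the first deviation point $p \in \mathit{Coop} \setminus \mathit{Adv}$, then building an environment that cooperates after $\sigma^*$'s move but plays a goal-avoiding, $\E$-maintaining strategy in every other subtree, which exists by positional determinacy of reachability games on the restricted arena) is the standard argument behind that algorithm's correctness. The subtleties you flag, determinacy on the $\E$-restricted arena and keeping every prefix $\E$-satisfying (i.e., staying in the environment's winning region of the $(\D, G_{\neg\E})$ game, where the agent moves first and the environment may condition on the agent's move at $p$), do go through.
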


\LTLf best-effort synthesis can be solved by a reduction to suitable \DFA games and is 2EXPTIME-complete~\cite{ADR2021}.

\paragraph{\DFA Game.}
A \emph{\DFA game} is a two-player game played on a deterministic finite automaton~(\DFA). Formally, a \DFA is defined as a pair $\A = (\D, F)$, where $\D$ is a deterministic transition system such that $\D = (2^{\X \cup \Y}, S, s_0, \delta)$, where $2^{\X \cup \Y}$ is the alphabet, $S$ is the state set, $s_0 \in S$ is the initial state and $\delta\colon S \times 2^{\X \cup \Y} \rightarrow S $ is the deterministic \emph{transition function}, and $F \subseteq S$ is a set of final states. We call $|S|$ the \emph{size} of $\D$. Given a finite word $\pi = (X_0 \cup Y_0)\ldots (X_n \cup Y_n) \in (2^{\X \cup \Y})^+$, running $\pi$ in $\D$ yields the sequence $\rho = s_0 \ldots s_{n+1}$ such that $s_0$ is the initial state of $\D$ and $s_{i+1} = \delta(s_i, X_i \cup Y_i)$ for all $i$. Since the transitions in $\D$ are all deterministic, we denote by $\rho = \run(\pi, \D)$ the unique sequence induced by running $\pi$ on $\D$. We define the \emph{product} of transition systems as follows.

\begin{definition}~\label{def:product}
The product of transition systems $\D_i = (\Sigma, S_i, s_{(0,i)}, \delta_i)$ (with $i = 1, 2$) over the same alphabet is the transition system $\D_1 \times \D_2 = (\Sigma, S, s_0, \delta)$ with: $S = S_1 \times S_2$; $s_0 = (s_{(0,1)}, s_{(0,2)})$; and $\delta((s_1, s_2), x) = (\delta(s_1, x), \delta(s_2, x))$. The product $\D_1 \times \ldots \times \D_n$ is defined analogously for any finite sequence $\D_1, \ldots, \D_n$ of transition systems over the same alphabet.
\end{definition}

A finite word $\pi$ is \emph{accepted} by $\A = (\D, F)$ if the last state of the run it induces is a final state, i.e., $\ls(\rho) \in F$, where $\rho = \run(\pi, \D)$. The language of $\A$, denoted as $\L(\A)$, consists of all words accepted by the automaton. Every \LTLf formula $\varphi$ can be transformed into a \DFA $\A_{\varphi}$ that accepts exactly the traces that satisfy the formula, in other words, $\A_{\varphi}$ \emph{recognizes} $\varphi$.

\begin{theorem}[~\cite{DegVa13}] Given an \LTLf formula over $\Sigma$, we can build a \DFA $\A_{\varphi} = (\D_{\varphi}, F_{\varphi})$ whose size is at most double-exponential in $|\varphi|$ such that $\pi \models \varphi$ iff $\pi \in \L(\A_{\varphi})$.
\end{theorem}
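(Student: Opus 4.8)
The plan is to prove the theorem by the classical automata-theoretic route for \LTLf, proceeding through an intermediate automaton model before determinizing. The statement to establish is that every \LTLf formula $\varphi$ over $\Sigma$ admits a \DFA $\A_\varphi = (\D_\varphi, F_\varphi)$ of at most doubly-exponential size recognizing exactly the traces satisfying $\varphi$. First I would set up the correspondence between \LTLf and first-order logic: since \LTLf is interpreted over finite, nonempty traces, each $\varphi$ can be read as a formula describing a language $\L(\varphi) \subseteq (2^\Sigma)^+$, and the goal is to realize this language by a \DFA. The natural strategy is to exploit the known expressive equivalence of \LTLf with monadic second-order logic (\MSO) over finite words, and hence with regular languages, and then track the size blow-up carefully through the construction.

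The key steps, in order, are as follows. First I would build, by induction on the structure of $\varphi$, a compact automaton-like object recognizing $\L(\varphi)$; the cleanest device is an \emph{alternating} finite automaton (\AFA), because the inductive cases for the Boolean connectives $\wedge$ and $\neg$ and the temporal operators $\Next$ and $\Until$ translate directly into alternating transitions without any product blow-up at each step. This gives an \AFA whose number of states is \emph{linear} in $|\varphi|$. Second, I would remove alternation by the standard \AFA-to-\NFA translation, which incurs a single exponential: an \AFA with $n$ states yields an \NFA with at most $2^n$ states. Third, I would determinize the resulting \NFA via the subset construction, incurring a second exponential and yielding a \DFA with at most $2^{2^n}$ states. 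Composing these bounds with $n = O(|\varphi|)$ gives the claimed doubly-exponential size $2^{2^{O(|\varphi|)}}$. Correctness — i.e. that $\pi \models \varphi$ iff $\pi \in \L(\A_\varphi)$ — follows because each translation step preserves the accepted language: the inductive invariant for the \AFA states exactly ``$\pi, i \models \varphi$ iff the \AFA accepts the suffix of $\pi$ from position $i$,'' and alternation removal and determinization are language-preserving by construction.

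The main obstacle I anticipate is handling the finite-trace semantics precisely, rather than reusing off-the-shelf infinite-trace \LTL constructions. Two subtleties deserve care. The first is the treatment of $\Next$ at the last instant: since $\pi, i \models \Next \varphi$ requires $i < \ls(\pi)$, the \AFA must distinguish ``there is a next position'' from ``the trace ends here,'' so the transition relation and the acceptance condition must correctly encode end-of-word. The cleanest way to manage this is to read one letter at a time and let the alternating transition function fire a dedicated end-of-word acceptance check, ensuring $\Next \varphi$ fails precisely when no successor letter exists. The second subtlety is that \LTLf is interpreted over nonempty traces, so I must ensure the constructed \DFA does not accept the empty word and that the base case $\varphi = a$ correctly tests membership $a \in \pi_0$ at the first letter. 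Once these finite-trace boundary conditions are pinned down in the inductive construction of the \AFA, the two exponential blow-ups from dealertation and determinization are routine, and the size bound follows immediately.
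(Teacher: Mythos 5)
Your proposal is correct and follows essentially the same route as the proof in the cited reference~\cite{DegVa13}, which this paper imports without reproving: a linear-size alternating automaton built by structural induction on the formula (with the fixpoint unfolding of $\Until$ and careful end-of-word handling of $\Next$), followed by the exponential alternation-removal step to an \NFA and the exponential subset construction to a \DFA, giving the $2^{2^{O(|\varphi|)}}$ bound. The finite-trace subtleties you flag (the strong semantics of $\Next$ at the last instant and the nonemptiness of traces) are exactly the points where that construction differs from the infinite-trace \LTL case, so your sketch matches the standard argument.
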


In a \DFA game $(\D, F)$, the transition system $\D$ is also called the \emph{game arena}. Given \(\sigma_{ag}\) and \(\sigma_{env}\) denoting an agent strategy and an environment strategy, respectively, the trace $\pi(\sigma_{ag}, \sigma_{env})$ is called a \emph{play}.
Specifically, a play is \emph{winning} if it contains a finite prefix that is accepted by the \DFA. Intuitively, \DFA games require $F$ to be visited at least once.
An agent strategy $\stagd$ is \emph{winning} in $(\D, F)$ if, for every environment strategy $\sigma_{env}$, it results that \(\pi(\sigma_{ag}, \sigma_{env})\) is winning.
Conversely, an environment strategy \(\sigma_{env}\) is \emph{winning} in the game $(\D, F)$ if, for every agent strategy $\sigma_{ag}$, it results that \(\pi(\sigma_{ag}, \sigma_{env})\) is not winning.
In \emph{\DFA games}, $s \in S$ is a \emph{winning} state for the agent~(resp.~environment) if the agent~(resp.~the environment) has a winning strategy in the game $(\D', F)$, where $\D'=(2^{\X \cup \Y}, S, s, \delta)$, i.e., the same arena $\D$ but with the new initial state $s$. By $\WinSet_\ag(\D, F)$~(resp.~$\WinSet_\env(\D, F)$) we denote the set of all agent~(resp.~environment) winning states. Intuitively, $\WinSet_\ag$ represents the ``agent winning region", from which the agent is able to win the game, no matter how the environment behaves.

We also define cooperatively winning strategies for \DFA games. An agent strategy \(\sigma_{ag}\) is \emph{cooperatively winning} in game \((\D, F)\) if there exists an environment strategy \(\sigma_{env}\) such that \(\pi(\sigma_{ag}, \sigma_{env})\) is winning. 
Hence, \(s \in S\) is a \emph{cooperatively winning state} 
if the agent
has a cooperatively winning strategy in the game \((\D', F)\), where \(\D' = (2^{\X \cup \Y}, S, s_0, \delta)\). 
By $\WinSet_\ag'(\D, F)$~
we denote the set of all agent
cooperative winning states.
%

When the agent makes its choices based only on the current state of the game, we say that it uses a \emph{positional strategy}. Formally, we define an \emph{agent positional strategy} (a.k.a. \emph{memory-less strategy}) as a function $\tau_{ag}: S \rightarrow 2^{\X}$. An agent positional strategy $\tau_{ag}$ \emph{induces} an agent strategy $\sigma_{ag}: (2^{\X})^* \rightarrow 2^{\Y}$ as follows: 
$\sigma_{ag}(\epsilon) = \tau(s_0)$ and, for $i \geq 0$, $\sigma_{ag}(X_0\ldots X_{i}) = \tau_{ag}(s_{i+1})$, where $s_{i+1}$ is the last state in the sequence $\rho = \run(\pi, \D)$, with $\pi$ being the finite sequence played until now, i.e., $\pi = (\sigma_{ag}(\epsilon) \cup X_0)(\sigma_{ag}(X_0) \cup X_1)\ldots(\sigma(X_0\ldots X_{k-1}) \cup X_{k})$.
Similarly, we can define an \emph{environment positional} strategy as a function \(\tau_{env}\colon S \times 2^{\Y} \rightarrow 2^\X\). A positional strategy for a player that is winning (resp. cooperatively winning) from every state in its winning region is called \emph{uniform winning} (resp. \emph{uniform cooperatively winning}).

The solution to \LTLf best-effort synthesis presented in \cite{ADR2021} can be summarized as follows.
\newline

\noindent\textbf{Algorithm 0}~\cite{ADR2021}\textbf{.} Given an \LTLf best-effort synthesis problem $\P = (\E, \varphi)$, proceed as follows:
\begin{enumerate}
\item 
For every $\xi \in \{\neg \E, \E \rightarrow \varphi, \E \wedge \varphi \}$ compute the \DFAs $\A_\xi = (\D_\xi, F_\xi)$.
\item 
Form the product $\D = \D_{\neg\E} \times \D_{\E \rightarrow \varphi} \times \D_{\E \wedge \varphi}$. Lift the final states of each component to the product, i.e. if $\A_\xi = (D_\xi, F_\xi)$ is the \DFA for $\xi$, then the lifted condition $G_\xi$ consists of all states $(s_{\neg \E}, s_{\E \rightarrow \varphi}, s_{\E \wedge \varphi})$ s.t. $s_\xi \in F_\xi$.
\item 
In \DFA game $(\D, G_{\E \rightarrow \varphi})$ compute a uniform positional winning strategy $f_{ag}$. Let $W_{ag} \subseteq S$ be the agent's winning region.
\item 
In \DFA game $(\D, G_{\neg \E})$ compute the environment's winning region $V \subseteq Q$.
\item 
Compute the environment restriction $\D'$ of $\D$ to $V$.
\item 
In \DFA game $(\D', G_{\E \wedge \varphi})$ find a uniform positional cooperatively winning strategy $g_{ag}$.
\item
\textbf{Return} the agent strategy $\sigma_{ag}$ induced by the positional strategy $k_{ag}$, which is defined as follows: 
$k_{ag}(s) = 
\begin{cases}
f_{ag}(s) & \text{ if } s \in W_{ag}, \\
g_{ag}(s) & \text{ otherwise. }
\end{cases}
$
\end{enumerate}

\section{Symbolic \LTLf Best-effort Synthesis~\label{sec:4}}

We present in this section three different symbolic approaches to \LTLf best-effort synthesis, namely monolithic, explicit-compositional, and symbolic-compositional, as depicted in Figure~\ref{fig:algs}. In particular, we base on the symbolic techniques of DFA games presented in~\cite{ZTLPV17}, which we briefly review below.


\begin{figure}
    \centering
    \includegraphics[width=\linewidth]{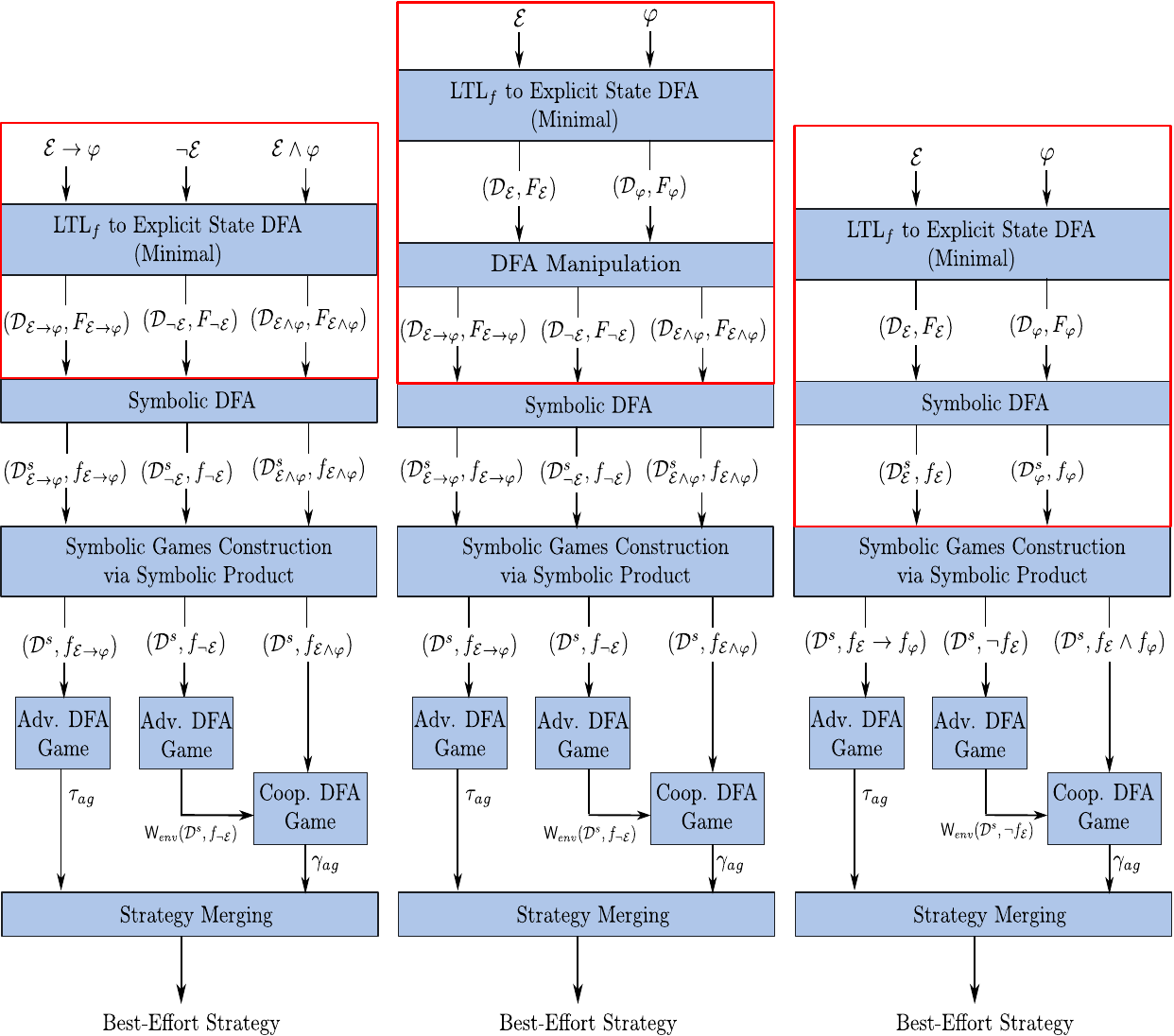}
    \caption{From left to right, \mya monolithic, \myb explicit-compositional, and \myc symbolic-compositional techniques to \LTLf best-effort synthesis. In particular, $\D^s = \D^s_{\E \rightarrow \varphi} \times \D^s_{\neg \E} \times \D^s_{\E \wedge \varphi}$ in \mya and \myb. $\D^s = \D^s_{\E} \times \D^s_{\varphi}$ in \myc. The specific operations of the three techniques are enclosed in red boxes.}
    \label{fig:algs}
\end{figure}

\subsection{Symbolic \DFA Games~\label{sec:4-1}}

We consider the \DFA representation described in Section~\ref{sec:3} as an explicit-state representation. Instead, we are able to represent a \DFA more compactly in a symbolic way by using a logarithmic number of propositions to encode the state space.
More specifically, the \emph{symbolic} representation of $\D$ is a tuple $\D^s = (\X, \Y, \Z, Z_0, \eta)$, where $\Z$ is a set of state variables such that $|\Z| = \lceil \log |S| \rceil$, and every state $s \in S$ corresponds to an interpretation $Z \in 2^\Z$ over $\Z$; $Z_0 \in 2^{\Z}$ is the interpretation corresponding to the initial state $s_0$; $\eta \colon 2^\X \times 2^\Y \times 2^\Z \rightarrow 2^\mathcal{Z}$ is a Boolean function such that $\eta(Z, X, Y) = Z'$ if and only if $Z$ is the interpretation of a state $s$ and $Z'$ is the interpretation of the state $\delta(s, X \cup Y)$.
The set of goal states is represented by a Boolean function $f$ over $\Z$ that is satisfied exactly by the interpretations of states in $F$. In the following, we denote symbolic \DFAs as pairs $(\D^s, f)$.
%


Given a symbolic \DFA game \((\D^s, f)\), we can compute a positional uniform winning agent strategy through a least fixpoint computation over two Boolean formulas $w$ over $\Z$ and $t$ over $\Z \cup \Y$, which represent the agent winning region and winning states with agent actions such that, regardless of how the environment behaves, the agent reaches the final states, respectively. Specifically, $w$ and $t$ are initialized as $w_0(\Z) = f(\Z)$ and $t_0(\Z, \Y) = f(\Z)$, since every goal state is an agent winning state. Note that $t_0$ is independent of the propositions from $\Y$, since once the play reaches goal states, the agent can do whatever it wants. $t_{i+1}$ and $w_{i+1}$ are constructed as follows:

\begin{align*}
    & t_{i+1}(Z, Y) = t_{i}(Z, Y) \vee (\neg w_{i}(Z) \wedge \forall X.w_{i}(\eta(X, Y, Z))) \\
    & w_{i+1}(Z) = \exists Y.t_{i+1}(Z, Y)
\end{align*}

The computation reaches a fixpoint when $w_{i+1} \equiv w_i$. To see why a fixpoint is eventually reached, note that function $w_{i+1}$ is \emph{monotonic}. That is, at each step, a state $Z$ is added to the winning region $w_{i+1}$ only if it has not been already detected as a winning state, written $\neg w_{i}(Z)$ in function $t_{i+1}(Z, Y)$ above, \emph{and} there exists an agent choice $Y$ such that, for every environment choice $X$, the agent moves in $w_{i}$, written $\forall X.w_{i}(\eta(X, Y, Z))$. 

When the fixpoint is reached, no more states will be added, and so all agent winning states have been collected. By evaluating $Z_0$ on $w_{i+1}$ we can determine if there exists a winning strategy. If that is the case, $t_{i+1}$ can be used to compute a uniform positional winning strategy through the mechanism of Boolean synthesis~\cite{FriedTV16}. More specifically, by passing \(t_i\) to a Boolean synthesis procedure, setting \(\Z\) as input variables and \(\Y\) as output variables, we obtain a uniform positional winning strategy \(\tau: 2^\Z \rightarrow 2^\Y\) that can be used to induce an agent winning strategy.

Computing a uniform positional cooperatively winning strategy can be performed through an analogous least-fixpoint computation. To do this, we define again Boolean functions \(\hat{w}\) over \(\Z\) and \(\hat{t}\) over \(\Z \cup \Y\), now representing the agent cooperatively winning region and cooperatively winning states with agent actions such that, if the environment behaves cooperatively, the agent reaches the final states. Analogously, we initialize \(\hat{w}_0(\Z) = f(\Z)\) and \(\hat{t}_0(\Z, \Y) = f(\Z)\).
Then, we construct \(\hat{t}_{i+1}\) and \(\hat{w}_{i+1}\) as follows:

\begin{align*}
    & \hat{t}_{i+1}(Z, Y) = \hat{t}_{i}(Z, Y) \vee (\neg \hat{w}_{i}(Z) \wedge \exists X.\hat{w}_{i}(\eta(X, Y, Z))) \\
    & \hat{w}_{i+1}(Z) = \exists Y.\hat{t}_{i+1}(Z, Y);
\end{align*}

Once the computation reaches the fixpoint, checking the existence and computing a uniform cooperatively winning positional strategy can be done similarly.

Sometimes, the state space of a symbolic transition system must be restricted to not reach a given set of invalid states represented as a Boolean function. To do so, we redirect all transitions from states in the set to a \emph{sink} state. Formally:

\begin{definition}~\label{def:restrict}
    Let $\D^s = (\Z, \X, \Y, Z_0, \eta)$ be a symbolic transition system and $g$ a Boolean formula over $\Z$ representing a set of states. The restriction of $\D^s$ to $g$ is a new symbolic transition system $\D'^s = (\Z, \X, \Y, Z_0, \eta')$, where $\eta'$ only agrees with $\eta$ if $Z \models g$, i.e., $\eta' = \eta \land g$.
\end{definition}



\subsection{Monolithic Approach}\label{sec:4-2}
The monolithic approach is a direct implementation of the best-effort synthesis approach presented in~\cite{ADR2021} (i.e., of Algorithm~0), utilizing the symbolic synthesis framework introduced in~\cite{ZTLPV17}. Given a best-effort synthesis problem $\P = (\E, \varphi)$, we first construct the \DFAs following the synthesis algorithm described in Section~\ref{sec:3}, and convert them into a symbolic representation. Then, we solve suitable games on the symbolic \DFAs and obtain a best-effort strategy. The workflow of the monolithic approach, i.e., \textbf{Algorithm 1}, is shown in Figure~\ref{fig:algs}\mya. We elaborate on the algorithm as follows.

\medskip
\noindent \textbf{Algorithm 1.} Given an \LTLf best-effort synthesis problem $\P = (\E, \varphi)$, proceed as follows:
\begin{enumerate}
    \item 
    For \LTLf formulas \(\E \rightarrow \varphi\), \(\neg \E\) and \(\E \wedge \varphi\) compute the corresponding minimal explicit-state \DFAs \(\A_{\E \rightarrow \varphi} = (\D_{\E \rightarrow \varphi}, F_{\E \rightarrow \varphi})\), \(\A_{\neg \E} = (\D_{\neg \E}, F_{\neg \E})\) and \(\A_{\E \wedge \varphi} = (\D_{\E \wedge \varphi}, F_{\E \wedge \varphi})\).
    \item 
    Convert the \DFAs to a symbolic representation to obtain $\A^s_{\E \rightarrow \varphi} = (\D^s_{\E \rightarrow \varphi}, f_{\E \rightarrow \varphi})$, $\A^s_{\neg \E} = (\D^s_{\neg \E}, f_{\neg \E})$ and $\A^s_{\E \wedge \varphi} = (\D^s_{\E \wedge \varphi}, f_{\E \wedge \varphi})$.
    \item 
    Construct the product $\D^s = \D^s_{\E \limpl \varphi} \times \D^s_{\lneg \E} \times \D^s_{\E \land \varphi}$.
    \item 
    In \DFA game $(\D^s, f_{\E \rightarrow \varphi})$, compute a uniform positional winning strategy $\tau_{\ag}$ and the agent's winning region $\wina(\D^s, f_{\E \rightarrow \varphi})$. 
    \item 
    In \DFA game $(\D^s, f_{\neg \E})$, compute the environment's winning region $\wine(\D^s, f_{\neg \E})$.
    \item 
    Compute the symbolic restriction $\D'^s$ of $\D^s$ to  $\wine(\D^s, f_{\neg \E})$ to restrict the state space of $\D^s$ to considering $\wine(\D^s, f_{\neg \E})$ only.
    \item 
    In \DFA game $(\D'^s, f_{\E \wedge \varphi})$, compute a uniform positional cooperatively winning strategy $\gamma_\ag$.
    \item 
    \textbf{Return} the best-effort strategy $\sigma_{ag}$ \emph{induced} by the positional strategy $\kappa_{ag}$ constructed as follows: $\kappa_{ag}(Z) = 
    \begin{cases}
    \tau_\ag(Z) & \text{ if } Z \models \wina(\D^s, f_{\E \rightarrow \varphi}) \\
    \gamma_\ag(Z) & \text{ otherwise.}
    \end{cases}
    $
\end{enumerate}

The main challenge in the monolithic approach comes from the \LTLf-to-\DFA conversion, which can take, in the worst case, double-exponential time~\cite{DegVa13}, and thus is also considered the bottleneck of \LTLf synthesis~\cite{ZTLPV17}.
To that end, we propose an explicit-compositional approach to diminish this difficulty by decreasing the number of \LTLf-to-\DFA conversions. 

\subsection{Explicit-Compositional Approach}\label{sec:4-3}
As described in Section~\ref{sec:4-2}, the monolithic approach to a best-effort synthesis problem $\P = (\E, \varphi)$ involves three rounds of \LTLf-to-\DFA conversions corresponding to \LTLf formulas $\E \rightarrow \varphi$, $\neg \E$ and $\E \wedge \varphi$. However, observe that \DFAs \(\A_{\E \rightarrow \varphi}\), \(\A_{\neg \E}\) and \(\A_{\E \wedge \varphi}\) can, in fact, be constructed by manipulating the two \DFAs \(\A_{\E}\) and \(\A_{\varphi}\) of \LTLf formulas $\E$ and $\varphi$, respectively. Specifically, given the explicit-state \DFAs \(\A_{\varphi}\) and \(\A_{\E}\), we obtain \(\A_{\E \rightarrow \varphi}\), \(\A_{\neg \E}\) and \(\A_{\E \wedge \varphi}\) as follows:

\begin{compactitem}
    \item \(\A_{\E \rightarrow \varphi} = \comp(\inter(\A_{\E}, \comp(\A_{\varphi}))\);
    \item \(\A_{\neg \E} = \comp(\A_{\E})\);
    \item \(\A_{\E \wedge \varphi} = \inter(\A_{\E}, \A_{\varphi})\);
\end{compactitem}
where $\comp$ and $\inter$ denote complement and intersection on explicit-state \DFAs, respectively. 
Note that transforming \LTLf formulas into \DFAs takes double-exponential time in the size of the formula, while the complement and intersection of \DFAs take polynomial time in the size of the \DFA.

The workflow of the explicit-compositional approach, i.e., \textbf{Algorithm 2}, is shown in Figure~\ref{fig:algs}\myb. 
As the monolithic approach, we first translate the formulas \(\E\) and \(\varphi\) into minimal explicit-state \DFAs \(\A_{\E}\) and \(\A_{\varphi}\), respectively. Then, \DFAs \(\A_{\E \rightarrow \varphi}\), \(\A_{\neg \E}\) and \(\A_{\E \wedge \varphi}\) are constructed by manipulating \(\A_{\E}\) and \(\A_{\varphi}\) through complement and intersection. Indeed, the constructed explicit-state \DFAs are also minimized.
The remaining steps of computing suitable \DFA games are the same as in the monolithic approach.

\subsection{Symbolic-Compositional Approach}\label{sec:4-4}


The monolithic and explicit-compositional approaches are based on playing three games over the symbolic product of transition systems $\D_{\E \limpl \varphi}$, $\D_{\neg \E}$, and $\D_{\E \land \varphi}$. 
We observe that given \DFAs $\A_\E = (\D_{\E}, F_{\E})$ and $\A_{\varphi} = (\D_{\varphi}, F_{\varphi})$ recognizing $\E$ and $\varphi$, respectively, the \DFA recognizing any Boolean combination of $\E$ and $\varphi$ can be constructed by taking the product of $\D_{\E}$ and $\D_{\varphi}$ and properly defining the set of final states over the resulting transition system. 

\begin{lemma}~\label{lem:construct}
    Let $\A_{\psi_1} = (\D_{\psi_1}, F_{\psi_1})$ and $\A_{\psi_2} = (\D_{\psi_2}, F_{\psi_2})$ be the automata recognizing \LTLf formulas $\psi_1$ and $\psi_2$, respectively, and $\psi = \psi_1 \ op \ \psi_2$ denoting an arbitrary Boolean combination of $\psi_1$ and $\psi_2$, i.e., $op \in \{ \land, \lor, \rightarrow, \leftrightarrow\}$. The \DFA $\hat{\A_{\psi}} = (\hat{\D_{\psi}}, \hat{F}_{\psi})$ with $\hat{\D_{\psi}} = \D_{\psi_1} \times \D_{\psi_2}$ and $\hat{F}_{\psi} = \{(s_{\psi_1}, s_{\psi_2}) \ | \ s_{\psi_1} \in F_{\psi_1} \ op \ s_{\psi_2} \in F_{\psi_2}\}$ recognizes $\psi$.
\end{lemma}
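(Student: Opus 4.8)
The plan is to prove the two languages coincide by establishing, for every finite word $\pi \in (2^{\X \cup \Y})^+$, the equivalence $\pi \in \L(\hat{\A_{\psi}})$ iff $\pi \models \psi$; by the hypothesis that $\A_{\psi_1}$ and $\A_{\psi_2}$ recognize $\psi_1$ and $\psi_2$, this is exactly what it means for $\hat{\A_{\psi}}$ to recognize $\psi$. The whole argument reduces to showing that the product run decomposes into the two component runs and then reading off the acceptance condition.

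First I would establish a \emph{projection property} of the product construction of Definition~\ref{def:product}: running $\pi$ on $\hat{\D_{\psi}} = \D_{\psi_1} \times \D_{\psi_2}$ yields, position by position, the pair of states reached by running $\pi$ on the two components separately. Concretely, writing $\run(\pi, \D_{\psi_1}) = s^1_0 \ldots s^1_{n+1}$ and $\run(\pi, \D_{\psi_2}) = s^2_0 \ldots s^2_{n+1}$, I claim $\run(\pi, \hat{\D_{\psi}}) = (s^1_0, s^2_0) \ldots (s^1_{n+1}, s^2_{n+1})$. This follows by a routine induction on the length of $\pi$: the base case holds because the product initial state is the pair of component initial states, and the inductive step is immediate from the componentwise definition $\delta((s_1, s_2), x) = (\delta_1(s_1, x), \delta_2(s_2, x))$ of the product transition function. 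In particular, the last product state satisfies $\ls(\run(\pi, \hat{\D_{\psi}})) = (\ls(\run(\pi, \D_{\psi_1})), \ls(\run(\pi, \D_{\psi_2})))$.

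Next I would unwind the acceptance condition. Set $s_1 = \ls(\run(\pi, \D_{\psi_1}))$ and $s_2 = \ls(\run(\pi, \D_{\psi_2}))$. By definition $\pi \in \L(\hat{\A_{\psi}})$ iff $\ls(\run(\pi, \hat{\D_{\psi}})) \in \hat{F}_{\psi}$, and by the projection property this last state is $(s_1, s_2)$. The definition of $\hat{F}_{\psi}$ then gives that $(s_1, s_2) \in \hat{F}_{\psi}$ holds iff the Boolean combination $(s_1 \in F_{\psi_1}) \ op \ (s_2 \in F_{\psi_2})$ is true. Since $\A_{\psi_i}$ recognizes $\psi_i$, we have $s_i \in F_{\psi_i}$ iff $\pi \in \L(\A_{\psi_i})$ iff $\pi \models \psi_i$. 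Substituting, $\pi \in \L(\hat{\A_{\psi}})$ iff $(\pi \models \psi_1) \ op \ (\pi \models \psi_2)$, which by the semantics of the Boolean connective $op$ is precisely $\pi \models \psi_1 \ op \ \psi_2 = \psi$.

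The argument is routine and no single step is a genuine obstacle; the only point requiring care is the bookkeeping in the projection property, namely making the indices of the product run line up with those of the two component runs so that their final states correspond, which is why I isolate it as an explicit induction rather than assert it. The one hypothesis that must be used carefully is that all three automata share the common alphabet $2^{\X \cup \Y}$ and read $\pi$ synchronously, so that each component processes the same letter at each step; this is exactly the shared-alphabet requirement built into Definition~\ref{def:product}, and it is what licenses the componentwise decomposition of the run.
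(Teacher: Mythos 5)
Your proposal is correct and follows essentially the same route as the paper's proof: both arguments rest on the fact that the synchronous product run projects componentwise onto the runs of $\D_{\psi_1}$ and $\D_{\psi_2}$, and then read the acceptance condition off the definition of $\hat{F}_{\psi}$ together with the hypothesis that each $\A_{\psi_i}$ recognizes $\psi_i$. The only difference is presentational: you phrase it as a single chain of equivalences and make the projection property an explicit induction, whereas the paper argues the two directions separately and simply asserts that property from the definition of the product.
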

\begin{proof}
\noindent ($\rightarrow$) Assume $\pi \models \psi$. We will prove that $\pi \in \L(\hat{\A}_{\varphi})$. To see this, observe that $\pi \models \psi$ implies $\pi \models \psi_1 \ op \ \pi \models \psi_2$. It follows by \cite{DegVa13} that $\pi \in \L(\A_{\psi_1}) \ op \ \pi \in \L(\A_{\psi_2})$, meaning that running $\pi$ in $\D_{\psi_1}$ and $\D_{\psi_2}$ yields the sequences of states $(s_0^{\psi_1}, \ldots, s_n^{\psi_1})$ and $(s_0^{\psi_2}, \ldots, s_n^{\psi_2})$ such that  $s_n^{\psi_1} \in F_{\psi_1} \ op \ s_n^{\psi_2} \in F_{\psi_2}$. Since $\hat{\D}_\psi$ is obtained through synchronous product of $\D_{\psi_1}$ and $\D_{\psi_2}$, running $\pi$ in $\hat{\A}_{\psi}$ yields the sequence of states $((s_0^{\psi_1}, s_0^{\psi_2}), \ldots, (s_n^{\psi_1}, s_n^{\psi_2}))$, such that $(s_n^{\psi_1}, s_n^{\psi_2}) \in \hat{F}_{\psi}$. Hence, we have that $\pi \in \L(\hat{\A}_{\psi})$. \newline
\noindent ($\leftarrow$) Assume $\pi \in \L(\hat{\A}_{\varphi})$. We prove that $\pi \models \psi$. To see this, observe that $\pi \in \L(\hat{\A}_{\varphi})$ means that the run $\rho = (s_0^{\psi_1}, s_0^{\psi_2}) \ldots (s_n^{\psi_1}, s_n^{\psi_2})$ induced by $\pi$ on $\hat{\D}_{\psi}$ is such that $(s_n^{\psi_1}, s_n^{\psi_2}) \in \hat{F}_{\psi}$. This means, by construction of $\hat{F}_{\psi}$, that $(s_n^{\psi_1}, s_n^{\psi_2}) \text { s.t. } s_n^{\psi_1} \in F_{\psi_1} \ op \ s_n^{\psi_2} \in F_{\psi_2}$. Since $\hat{\D}_{\psi}$ is obtained through synchronous product of $\D_{\psi_1}$ and $\D_{\psi_2}$, it follows that $\pi \in \L(\A_{\psi_1}) \ op \ \pi \in \L(\A_{\psi_2})$. By \cite{DegVa13} we have that $\pi \models \psi_1 \ op \ \pi \models \psi_2$, and hence $\pi \models \psi$. 
\end{proof}

Notably, Lemma~\ref{lem:construct} tells that the \DFAs $\A_{\E \limpl \varphi}$, $\A_{\neg \E}$, and $\A_{\E \land \varphi}$ can be constructed from the same transition system by defining proper sets of final states. Specifically, given the \DFAs $\A_{\E} = (\D_{\E}, F_{\E})$ and $\A_{\varphi} = (\D_{\varphi}, F_{\varphi})$ recognizing $\E$ and $\varphi$, respectively, the \DFAs recognizing $\E \limpl \varphi$, $\lneg \E$, and $\E \land \varphi$ can be constructed as $\A_{\E \limpl \varphi} = (\D, F_{\E \limpl \varphi})$, $\A_{\lneg \E} = (\D, F_{\lneg \E})$, and $\A_{\E \land \varphi} = (\D, F_{\E \land \varphi})$, respectively, where $\D = \D_{\E} \times \D_{\varphi}$ and: 
\begin{compactitem}
    \item $F_{\E \limpl \varphi} = \{(s_{\E}, s_{\varphi}) \mid s_{\E} \in F_{\E} \limpl s_{\varphi} \in F_{\varphi}\}$.
    \item $F_{\lneg \E} = \{(s_{\E}, s_{\varphi}) \mid s_{\E} \not \in F_{\E}\}$.
    \item $F_{\E \land \varphi} = \{(s_{\E}, s_{\varphi}) \mid s_{\E} \in F_{\E} \land s_{\varphi} \in F_{\varphi}\}$.
\end{compactitem}

The symbolic-compositional approach precisely bases on this observation. As shown in Figure~\ref{fig:algs}\myc, we first transform the \LTLf formulas \(\E\) and \(\varphi\) into minimal explicit-state \DFAs \(\A_{\E}\) and \(\A_{\varphi}\), respectively, and then construct the symbolic representations \(\A^s_{\E}\) and \(\A^s_{\varphi}\) of them.
Subsequently, we construct the symbolic product $\D^s = \D^s_{\E} \times \D^s_{\varphi}$, once and for all, and get the three \DFA games by defining the final states (which are Boolean functions) from \(f_{\E}\) and \(f_{\varphi}\) as follows:
\begin{compactitem}
    \item \(f_{\E \rightarrow \varphi} = f_{\E} \rightarrow f_{\varphi}\).
    \item \(f_{\neg \E} = \neg f_{\E}\).
    \item \(f_{\E \wedge \varphi} = f_{\E} \wedge f_{\varphi}\).
\end{compactitem}
From now on, the remaining steps are the same as in the monolithic and explicit-compositional approaches.

\medskip
\noindent \textbf{Algorithm 3.} Given a best-effort synthesis problem \(\P = (\E, \varphi)\), proceed as follows:
\begin{enumerate}
\item 
    Compute the minimal explicit-state \DFAs \(\A_{\E} = (\D_{\E}, F_{\E})\) and \(\A_{\varphi} = (\D_{\varphi}, F_{\varphi})\).
    \item 
    Convert the \DFAs to a symbolic representation to obtain \(\A^s_{\E} = (\D^s_{\E}, f_{\E})\) and \(\A^s_{\varphi} = (\D^s_{\varphi}, f_{\varphi})\).
\item Construct the symbolic product $\D^s = \D^s_{\E} \times \D^s_{\varphi}$.
\item 
    In \DFA game $\G^s_{\E \rightarrow \varphi} = (\D^s, f_{\E} \limpl f_{\varphi})$ compute a positional uniform winning strategy $\tau_{ag}$ and the agent winning region $\wina(\D^s, f_{\E} \limpl f_{\varphi})$. 
\item 
    In the \DFA game $(\D^s, \neg f_{\E})$ compute the environment's winning region $\wine(\D^s, \neg f_{\E})$.
\item 
    Compute the symbolic restriction $\D'^s$ of $\D^s$ to  $\wine(\D^s, f_{\neg \E})$ so as to restrict the state space of $\D^s$ to considering $\wine(\D^s, f_{\neg \E})$ only.
\item 
    In the \DFA game $(\D'^s, f_{\E} \wedge f_{\varphi})$ find a positional cooperatively winning strategy \(\gamma_{ag}\).
\item 
\textbf{Return} the best-effort strategy $\sigma_{ag}$ \emph{induced} by the positional strategy $\kappa_{ag}$ constructed as follows: $\kappa_{ag}(Z) = 
    \begin{cases}
    \tau_\ag(Z) & \text{ if } Z \models \wina(\D^s, f_{\E \rightarrow \varphi}) \\
    \gamma_\ag(Z) & \text{ otherwise.}
    \end{cases}
    $
\end{enumerate}



\section{Empirical Evaluations}\label{sec:5}
In this section, we first describe how we implemented our symbolic \LTLf best-effort synthesis approaches described in Section~\ref{sec:4}. Then, by empirical evaluation, we show that Algorithm 3, i.e., the symbolic-compositional approach, shows an overall best-performance. In particular, we show that performing best-effort synthesis only brings a minimal overhead with respect to standard synthesis and may even show better performance on certain instances.
\subsection{Implementation}

We implemented the three symbolic approaches  to \LTLf best-effort synthesis described in Section~\ref{sec:4} in a tool called \textit{BeSyft}, by extending the symbolic synthesis framework~\cite{ZTLPV17,TabajaraV19} integrated in state-of-the-art synthesis tools~\cite{BLTV,DeGiacomoF21}. In particular, we based on \textsc{Lydia}~\footnote{https://github.com/whitemech/lydia}, the overall best performing \LTLf-to-\DFA conversion tool, to construct the minimal explicit-state \DFAs of \LTLf formulas. 
Moreover, \besyft borrows the rich APIs from \textsc{Lydia} to perform relevant explicit-state \DFA manipulations required by both Algorithm 1, i.e., the monolithic approach~(c.f.,  Subsection~\ref{sec:4-2}), and Algorithm 2, i.e., the explicit-compositional approach~(c.f., Subsection~\ref{sec:4-3}), such as complement, intersection, minimization. As in~\cite{ZTLPV17,TabajaraV19}, the symbolic \DFA games are represented in Binary Decision Diagrams~(BDDs)~\cite{Bryant92}, utilizing CUDD-3.0.0~\cite{cudd} as the BDD library. Thereby, \besyft constructs and solves symbolic \DFA games using Boolean operations provided by CUDD-3.0.0, such as negation, conjunction, and quantification. 
The uniform positional winning strategy $\tau_\ag$ and the uniform positional cooperatively winning strategy $\gamma_\ag$ are computed utilizing Boolean synthesis~\cite{FriedTV16}. The positional best-effort strategy is obtained by applying suitable Boolean operations on $\tau_\ag$ and $\gamma_\ag$. 
As a result, we have three derivations of \besyft, namely \besyft-Alg-1, \besyft-Alg-2, and \besyft-Alg-3, corresponding to the monolithic, explicit-compositional, and symbolic-compositional approach, respectively.

\subsection{Experiment Methodology} 

\paragraph{Experiment Setup.}
All experiments were run on a laptop with an operating system 64-bit Ubuntu 20.04, 3.6 GHz CPU, and 12 GB of memory. Time out
was set to 1000 seconds.

\paragraph{Benchmarks.} We devised a \emph{counter-game} benchmark, based on the one proposed in \cite{ZGPV20}. More specifically, there is an $n$-bit binary counter and, at each round, the environment chooses whether to issue an increment request for the counter or not. The agent can choose to grant the request or ignore it and its goal is to get the counter to have all bits set to $1$. The increment requests only come from the environment, and occur in accordance with the environment specification. The size of the minimal \DFA of a counter-game specification grows exponentially as $n$ increases. 


In the experiments, environment specifications ensure that the environment eventually issues a minimum number $K$ of increment requests in sequence, which can be represented as \LTLf formulas $\E_{K} = \Diamond(add \wedge \Wnext(add) \ldots \wedge \Wnext(\ldots(\Wnext(add))\ldots))$, where $K$ is the number of conjuncts. Counter-game instances may be realizable depending on the parameter $K$ and the number of bits $n$. In the case of a realizable instance, a strategy for the agent to enforce the goal is to grant all increment requests coming from the environment. Else, the agent can achieve the goal only if the environment behaves cooperatively, such as issuing more increment requests than that specified in the environment specification. That is, the agent needs a best-effort strategy. In our experiments, we considered counter-game instances with at most $n=10$ bits and $K=10$ sequential increment requests. As a result, our benchmarks consist of a total of $100$ instances.

\subsection{Experimental Results and Analysis.} 

\begin{figure}
    \centering
    \includegraphics[scale = .45]{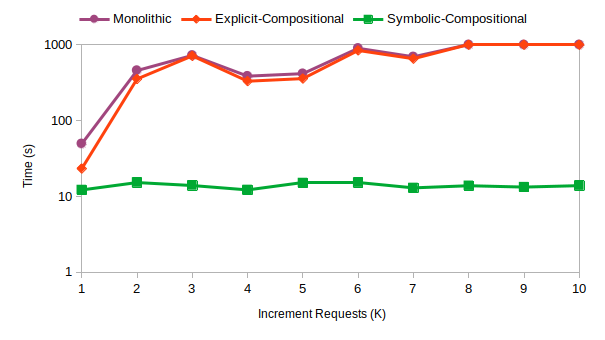}
    \caption{Comparison (in log scale) of \besyft implementations on counter game instances with $n=8$ and $1 \leq K \leq 10$.}
    \label{fig:besyft-comprson}
\end{figure}

\begin{figure}
    \centering
    \includegraphics[scale = .45]{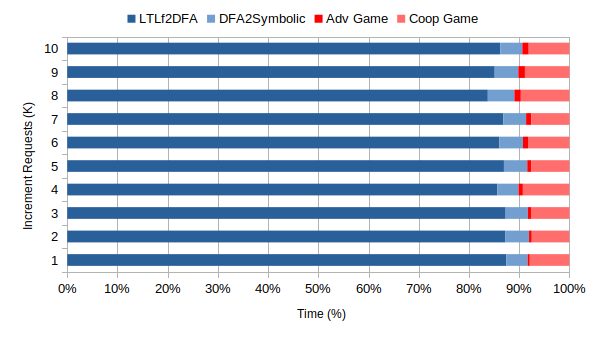}
    \caption{Relative time cost of \besyft-Alg-3 major operations on counter game instances with $n=8$ and $1 \leq K \leq 10$.}
    \label{fig:besyft-runtimes}
\end{figure}


In our experiments, all \besyft implementations are only able to solve counter-game instances with up to $n=8$ bits. Figure~\ref{fig:besyft-comprson} shows the comparison (in log scale) of the three symbolic implementations of best-effort synthesis on counter-game instances with $n=8$ and $1 \leq K \leq 10$. First, we observe that \besyft-Alg-1~(monolithic) and \besyft-Alg-2~(explicit-compositional) reach timeout when $K \geq 8$, whereas \besyft-Alg-3~(symbolic-compositional) is able to solve all $8$-bit counter-game instances. We can also see that \besyft-Alg-1 performs worse than the other two derivations since it requires three rounds of \LTLf-to-\DFA conversions, which in the worst case, can lead to a double-exponential blowup. Finally, we note that \besyft-Alg-3, which implements the symbolic-compositional approach, achieves orders of magnitude better performance than the other two implementations, although it does not fully exploit the power of \DFA minimization. Nevertheless, it is not the case that automata minimization always leads to improvement. Instead, there is a tread-off of performing automata minimization. As shown in Figure~\ref{fig:besyft-comprson}, \besyft-Alg-3, performs better than \besyft-Alg-2, though the former does not minimize the game arena after the symbolic product, and the latter minimizes the game arena as much as possible.

On a closer inspection, we evaluated the time cost of each major operation of \besyft-Alg-3, and present the results on counter-game instances with $n = 8$ and $1 \leq K \leq 10$ in Figure~\ref{fig:besyft-runtimes}. First, the results show that \LTLf-to-\DFA conversion is the bottleneck of \LTLf best-effort synthesis, the cost of which dominates the total running time. 
Furthermore, we can see that the total time cost of solving the cooperative \DFA game counts for less than $10 \%$ of the total time cost. As a result, we conclude that performing best-effort synthesis only brings a minimal overhead with respect to standard reactive synthesis, which consists of constructing the \DFA of the input \LTLf formula and solving its corresponding adversarial game. Also, we observe that solving the cooperative game takes longer than solving the adversarial game. Indeed, this is because the fixpoint computation in the cooperative game often requires more iterations than that in the adversarial game. 

\begin{figure}
    \centering
    \includegraphics[scale = .45]{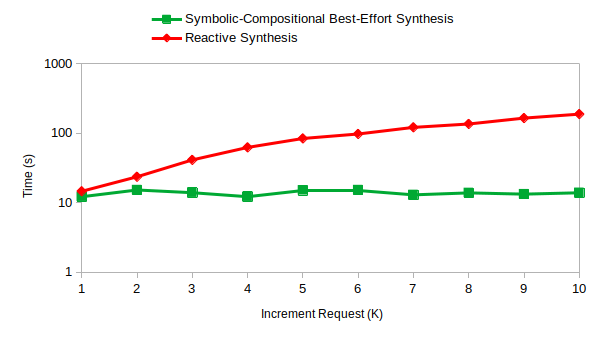}
    \caption{Comparison (in log scale) of \besyft-Alg-3 and implementations of symbolic \LTLf reactive synthesis on counter-game instances with $n=8$ and $1 \leq K \leq 10$.}
    \label{fig:bes-vs-rs}
\end{figure}

Finally, we also compared the time cost of symbolic-compositional best-effort synthesis with that of standard reactive synthesis on counter-game instances. More specifically, we considered a symbolic implementation of reactive synthesis that computes an agent strategy that enforces the \LTLf formula $\E \limpl \varphi$~\cite{lydia,ZTLPV17}, which can be used to find an agent strategy enforcing $\varphi$ under $\E$, if it exists~\cite{ADMR19}. Interestingly, Figure~\ref{fig:bes-vs-rs} shows that for certain counter-game instances, symbolic-compositional best-effort synthesis takes even less time than standard reactive synthesis. It should be noted that symbolic-compositional best-effort synthesis performs \LTLf-to-\DFA conversions of \LTLf formulas $\varphi$ and $\E$ separately and combines them to obtain the final game arena without having automata minimization, whereas reactive synthesis performs the \LTLf-to-\DFA conversion of formula $\E \limpl \varphi$ and minimizes its corresponding \DFA. These results confirm the practical feasibility of best-effort synthesis and that automata minimization does not always guarantee performance improvement.

\section{Conclusion~\label{sec:6}}
 We presented three different symbolic \LTLf best-effort synthesis approaches: monolithic, explicit-compositional, and symbolic-compositional. Empirical evaluations proved the outperformance of the symbolic-compositional approach. An interesting observation is that, although previous studies suggest taking the maximal advantage of automata minimization~\cite{TabajaraV19,ZGPV20}, in the case of \LTLf best-effort synthesis, there can be a trade-off in doing so. Another significant finding is that the best-performing \LTLf best-effort synthesis approach only brings a minimal overhead compared to standard synthesis. Given this nice computational result, a natural future direction would be looking into 
\LTLf best-effort synthesis with multiple environment assumptions~\cite{AminofGLMR21}. 

\section*{Acknowledgments}

This work has been partially supported by the ERC-ADG White- Mech (No. 834228), the EU ICT-48 2020 project TAILOR (No. 952215), the PRIN project RIPER (No. 20203FFYLK), and the PNRR MUR project FAIR (No. PE0000013).

\bibliographystyle{splncs04}
\bibliography{eumas23}



\end{document}